\newtheorem{proof}{Proof}
\newtheorem{theorem}{Proposition}
\title{Extracting Low-/High- Frequency Knowledge from Graph Neural Networks and Injecting it into MLPs: An Effective GNN-to-MLP Distillation Framework}
\author{
Lirong Wu {$^{1,2,^*}$}, Haitao Lin{$^{1,2,^*}$}, Yufei Huang{$^{1,2}$}, Tianyu Fan{$^{2}$}, Stan.Z.Li{$^{1,^\dagger}$}\\
}
\begin{document}
\maketitle

\begin{abstract}
Recent years have witnessed the great success of Graph Neural Networks (GNNs) in handling graph-related tasks. However, MLPs remain the primary workhorse for practical industrial applications due to their desirable inference efficiency and scalability. To reduce their gaps, one can directly distill knowledge from a well-designed teacher GNN to a student MLP, which is termed as GNN-to-MLP distillation. However, the process of distillation usually entails a loss of information, and \textit{``which knowledge patterns of GNNs are more likely to be left and distilled into MLPs?"} becomes an important question. In this paper, we first factorize the knowledge learned by GNNs into low- and high-frequency components in the spectral domain and then derive their correspondence in the spatial domain. Furthermore, we identified a potential \emph{information drowning} problem for existing GNN-to-MLP distillation, i.e., the high-frequency knowledge of the pre-trained GNNs may be overwhelmed by the low-frequency knowledge during distillation; we have described in detail what it represents, how it arises, what impact it has, and how to deal with it. In this paper, we propose an efficient \textit{Full-Frequency GNN-to-MLP} (FF-G2M) distillation framework, which extracts both low-frequency and high-frequency knowledge from GNNs and injects it into MLPs. Extensive experiments show that FF-G2M improves over the vanilla MLPs by 12.6\% and outperforms its corresponding teacher GNNs by 2.6\% averaged over six graph datasets and three common GNN architectures. Codes are publicly available at: \url{https://github.com/LirongWu/FF-G2M}.
\end{abstract}

\vspace{-1em}
\section{Introduction} \label{sec:1}
In many real-world applications, including social networks, chemical molecules, and citation networks, data can be naturally modeled as graphs. Recently, the emerging Graph Neural Networks (GNNs) \cite{hamilton2017inductive,kipf2016semi,velivckovic2017graph,wu2021self,wu2022multi,liu2020towards,zhou2020graph,liu2022gradients,xia2022simgrace} have demonstrated their powerful capability to handle various graph-related tasks \cite{zhang2018link,fan2019graph,errica2019fair,wu2021graphmixup}. However, practical deployments of GNNs in the industry are still less popular due to inference efficiency and scalability challenges incurred by data dependency \cite{jia2020redundancy,zhang2021graph}. In other words, GNNs generally rely on message passing to aggregate features from the neighborhood, but fetching and aggregating these nodes during inference can burden latency-sensitive applications. In contrast, Multi-Layer Perceptrons (MLPs) involve no data dependence between pairs of nodes and infer much faster than GNNs, but often with less competitive performance. Motivated by these complementary strengths and weaknesses, one solution to reduce their gaps is to perform GNN-to-MLP knowledge distillation \cite{yang2021extract,zhang2021graph,ghorbani2021gkd,gou2021knowledge}, which extracts knowledge from a well-trained teacher GNN and then distills it into a student MLP with the same network architecture (e.g., layer number and layer size).

Most of the existing GNN-to-MLP distillation methods \cite{yang2021extract,zhang2021graph,ghorbani2021gkd} focus on special designs on either student MLPs or teacher GNNs, but default to distill knowledge in a node-to-node fashion. For example, CPF \cite{yang2021extract} combines Label Propagation (LP) \cite{iscen2019label} into the student MLPs to improve classification performance and thus still suffers from the neighborhood-fetching latency caused by label propagation, which defeats the original intention of MLPs to be inference-efficient. In contrast, GLNN \cite{zhang2021graph} directly distills knowledge from arbitrary GNNs to vanilla MLPs with the same network architecture. While the distilled MLPs of GLNN can be greatly improved by employing more powerful teacher GNNs, the process of distillation usually entails a loss of information \cite{kim2021comparing}, which may lead to sub-optimal student MLPs. In this paper, we look away from specific instantiations of teacher GNNs and student MLPs, but rather explore two fundamental questions: (1) \textit{Can existing GNN-to-MLP distillation ensure that sufficient knowledge is distilled from teacher GNNs to student MLPs?} If not, (2) \textit{Which knowledge patterns of GNNs are more likely to be distilled into student MLPs?} 

\textbf{Present Work.} In this paper, we identify a potential \emph{information drowning} problem for existing GNN-to-MLP distillation, i.e., the high-frequency knowledge of the pre-trained GNNs may be overwhelmed by the low-frequency knowledge during distillation. To illustrate this, we first factorize GNN knowledge into low- and high-frequency components using graph signal processing theory in the spectral domain and then derive their correspondence in the spatial domain. Moreover, we conduct a comprehensive investigation of the roles played by low- and high-frequency components in the distillation process and describe in detail what \emph{information drowning} represents, how it arises, what impact it has, and how to deal with it. Extensive experiments have shown that high-frequency and low-frequency knowledge are complementary to each other, and they can further improve performance on top of each other. In this paper, we propose a novel \textit{Full-Frequency GNN-to-MLP} (FF-G2M) distillation framework, which extracts both low- and high-frequency knowledge from teacher GNNs and injects it into student MLPs.

\vspace{-0.5em}
\section{Related Work}
\textbf{Graph Neural Networks (GNNs).}
The early GNNs define graph convolution kernels in the spectral domain \cite{bruna2013spectral,defferrard2016convolutional} based on the graph signal processing theory, known as ChebyNet \cite{defferrard2016convolutional} and Graph Convolutional Networks (GCN) \cite{kipf2016semi}. The later GNNs directly define updating rules in the spatial space and focus on the design of neighborhood aggregation functions. For instance, GraphSAGE \cite{hamilton2017inductive} employs a generalized induction framework to generate embeddings for previously unseen nodes by aggregating known node features. Moreover, GAT \cite{velivckovic2017graph} introduces the self-attention mechanism to assign different importance scores to neighbors for better information aggregation. We refer interested readers to the surveys \cite{liu2020towards,wu2020comprehensive,zhou2020graph} for more GNN architectures. 
\newline
\vspace{-0.8em}

\noindent \textbf{Graph Knowledge Distillation.}
Despite the great progress, most existing GNNs share the de facto design that relies on message passing to aggregate features from neighborhoods, which may be one major source of latency in GNN inference. To address this problem, there are previous works that attempt to distill knowledge from large teacher GNNs to smaller student GNNs, termed as GNN-to-GNN \cite{lassance2020deep,wu2022knowledge,ren2021multi,joshi2021representation}. For example, the student model in RDD \cite{zhang2020reliable} and TinyGNN \cite{yan2020tinygnn} is a GNN with fewer parameters but not necessarily fewer layers than the teacher GNN, which makes both designs still suffer from the neighborhood-fetching latency caused by data dependency. 
\newline
\vspace{-0.8em}

To enjoy the low-latency of MLPs and high-accuracy of GNNs, the other branch of graph knowledge distillation is to directly distill from large teacher GNNs to student MLPs, termed as GNN-to-MLP. The existing work on GNN-to-MLP distillation can be mainly divided into two branches: student MLPs-focused and teacher GNNs-focused. The former branch, such as CPF \cite{yang2021extract}, \textit{directly} improves student MLPs by adopting deeper and wider network architectures or incorporating label propagation, both of which burden the inference latency. The other branch, such as GLNN \cite{zhang2021graph}, distills knowledge from teacher GNNs to vanilla MLPs with the same network architectures but without other computing-consuming operations; while the performance of their distilled MLPs can be \textit{indirectly} improved by employing more powerful GNNs, they still cannot match their corresponding teacher GNNs. Moreover, PGKD \cite{wu2023edge} proposes a Prototype-Guided Knowledge Distillation~(PGKD) method, which does not require graph edges yet learns structure-aware MLPs. In this paper, we aim to develop a \textbf{model-agnostic} GNN-to-MLP distillation that is applicable to various GNN architectures.

\vspace{-0.8em}
\section{Preliminaries}
\textbf{Notions.} 
Let $\mathcal{G}=(\mathcal{V}, \mathcal{E}, \mathbf{X})$ be an attributed graph, where $\mathcal{V}$ is the set of $N$ nodes with features $\mathbf{X}=\left[\mathbf{x}_{1}, \mathbf{x}_{2}, \cdots, \mathbf{x}_{N}\right]\in \mathbb{R}^{N \times d}$ and $\mathcal{E}$ denotes the edge set. Each node $v_i \in \mathcal{V}$ is associated with a $d$-dimensional features vector $\mathbf{x}_{i}$, and each edge $e_{i, j} \in \mathcal{E}$ denotes a connection between node $v_i$ and $v_j$. The graph structure is denoted by an adjacency matrix $\mathbf{A} \in[0,1]^{N \times N}$ with $\mathbf{A}_{i,j}=1$ if $e_{i,j}\in\mathcal{E}$ and $\mathbf{A}_{i,j}=0$ if $e_{i,j} \notin \mathcal{E}$. Consider a semi-supervised node classification task where only a subset of node $\mathcal{V}_L$ with labels $\mathcal{Y}_L$ are known, we denote the labeled set as $\mathcal{D}_L=(\mathcal{V}_L,\mathcal{Y}_L)$ and unlabeled set as $\mathcal{D}_U=(\mathcal{V}_U,\mathcal{Y}_U)$, where $\mathcal{V}_U=\mathcal{V} \backslash \mathcal{V}_L$. The node classification aims to learn a mapping $\Phi: \mathcal{V} \rightarrow \mathcal{Y}$ with labeled data, so that it can be used to infer the labels $\mathcal{Y}_U$.
\newline
\vspace{-0.8em}

\noindent\textbf{Graph Neural Networks (GNNs).}
Most existing GNNs rely on message passing to aggregate features from the neighborhood. A general GNN framework consists of two key computations for each node $v_i$: (1) $\operatorname{AGGREGATE}$: aggregating messages from neighborhood $\mathcal{N}_i$; (2) $\operatorname{UPDATE}$: updating node representation from its representation in the previous layer and aggregated messages. Considering a $L$-layer GNN, the formulation of the $l$-th layer is as follows
\vspace{-0.3em}
\begin{equation}
\begin{small}
\begin{aligned}
\mathbf{m}_{i}^{(l)}  =& \operatorname{AGGREGATE}^{(l)}\left(\big\{\mathbf{h}_{j}^{(l-1)}: v_{j} \in \mathcal{N}_i\big\}\right) \\ 
\mathbf{h}_{i}^{(l)}  =& \operatorname{UPDATE}^{(l)}\left(\mathbf{h}_{i}^{(l-1)}, \mathbf{m}_{i}^{(l)}\right)
\label{equ:1}
\end{aligned}
\end{small}
\end{equation}
where $1\leq l \leq L$, $\mathbf{h}_{i}^{(0)}=\mathbf{x}_{i}$ is the input feature, and $\mathbf{h}_{i}^{(l)}$ is the representation of node $v_i$ in the $l$-th layer.
\newline
\vspace{-0.8em}

\noindent\textbf{Multi-Layer Perceptrons (MLPs).}
To achieve efficient inference, the vanilla MLPs (with the same network architecture as the teacher GNNs) are used as the student model by default in this paper. For a $L$-layer MLP, the $l$-th layer is composed of a linear transformation, an activation function $\sigma=\mathrm{ReLu}(\cdot)$, and a dropout function $\operatorname{Dropout}(\cdot)$, as
\begin{equation}
\mathbf{z}^{(l)}_i=\operatorname{Dropout}\big(\sigma\big(\mathbf{z}^{(l-1)}_i \mathbf{W}^{(l-1)}\big)\big), \quad \mathbf{z}^{(0)}_i=\mathbf{x}_i
\label{equ:2}
\end{equation}
where $\mathbf{W}^{(0)} \in \mathbb{R}^{d \times F}$ and $\mathbf{W}^{(l)} \in \mathbb{R}^{F \times F}$ $(1 \leq l < L)$ are weight matrices with the hidden dimension $F$. In this paper, the network architecture of MLPs, such as the layer number $L$ and layer size $F$, is set the same as that of teacher GNNs.
\newline
\vspace{-0.8em}

\noindent\textbf{GNN-to-MLP Knowledge Distillation.}
The knowledge distillation is first introduced in \cite{hinton2015distilling} to handle mainly image data, where knowledge is transferred from a cumbersome teacher model to a simpler student model. The later works on GNN-to-MLP distillation \cite{yang2021extract,zhang2021graph,ghorbani2021gkd} extend it to the graph domain by imposing KL-divergence constraint $\mathcal{D}_{KL}(\cdot, \cdot)$ between the softmax label distributions generated by teacher GNNs and student MLPs and directly optimizing the objective function as follows
\begin{equation}
\begin{small}
\begin{aligned}
\mathcal{L}_{\mathrm{KD}}=\frac{1}{|\mathcal{V}|}\sum_{i\in\mathcal{V}}\mathcal{D}_{KL}\left(\operatorname{softmax}\big(\mathbf{z}_{i}^{(L)}\big), \operatorname{softmax}\big(\mathbf{h}_{i}^{(L)}\big)\right)
\label{equ:3}
\end{aligned}
\end{small}
\end{equation}

\section{Knowledge Factorization from the Perspective of Spectral and Spatial Domain}
In this section, we first theoretically factorize the knowledge learned by GNNs into low- and high-frequency components in the spectral domain based on graph signal processing theory \cite{shuman2013emerging}.
The normalized graph Laplacian matrix of graph $\mathcal{G}$ is defined as $\mathbf{L}\!=\!\mathbf{I}_{N}\!-\!\widetilde{\mathbf{D}}^{-\frac{1}{2}} \widetilde{\mathbf{A}} \widetilde{\mathbf{D}}^{-\frac{1}{2}}$, where $\widetilde{\mathbf{A}} \!=\! \mathbf{A} \!+\! \mathbf{I}_{N} \in \mathbb{R}^{N \times N}$ is an adjacency matrix with self-loop, $\widetilde{\mathbf{D}} \in \mathbb{R}^{N \times N}$ is a diagonal degree matrix with $\widetilde{\mathbf{D}}_{i, i}=\sum_{j} \widetilde{\mathbf{A}}_{i, j}$, and $\mathbf{I}_{N}$ denotes the identity matrix. Since $\mathbf{L}$ is a real symmetric matrix, it can be eigendecomposed as $\mathbf{L}=\mathbf{U} \Lambda \mathbf{U}^{\top}$, where $\Lambda=\operatorname{diag}\left(\left[\lambda_{1}, \lambda_{2}, \cdots, \lambda_{N}\right]\right)$ with each eigenvalue $\lambda_{l} \in[0,2]$ corresponding to an eigenvectors $\mathbf{u}_{l}$ in $\mathbf{U}$ \cite{chung1997spectral}. According to graph signal processing theory, we can directly take the eigenvector $\left\{\mathbf{u}_{l}\right\}_{l=1}^{N}$ as bases. Given signal $\mathbf{x} \in \mathbb{R}^{d}$, the graph Fourier transform and inverse Fourier transform \cite{sandryhaila2013discrete,ricaud2019fourier} are defined as $\widehat{\mathbf{x}}=\mathbf{U}^{\top} \mathbf{x}$ and $\mathbf{x}=\mathbf{U} \widehat{\mathbf{x}}$. Thus, the convolutional $*_G$ between the signal $\mathbf{x}$ and convolution kernel $\mathcal{F}$ can be defined as follows
\begin{equation}
\mathcal{F} *_{G} \mathbf{x}=\mathbf{U}\left(\left(\mathbf{U}^{\top} \mathcal{F} \right) \odot\left(\mathbf{U}^{\top} \mathbf{x}\right)\right)=\mathbf{U} \mathbf{g}_{\theta} \mathbf{U}^{\top} \mathbf{x}
\end{equation}
\noindent where $\odot$ denotes the element-wise product and $\mathbf{g}_{\theta}$ is a parameterized diagonal matrix. Most of the existing GNNs architectures can be regarded as a special instantiation on the convolutional kernel $\mathcal{F}$ (i.e., the matrix $\mathbf{g}_{\theta}$). For example, GCN-Cheby parameterizes $g_{\theta}$ with a polynomial expansion $\mathbf{g}_{\theta}\!=\!\sum_{k=0}^{K-1} \alpha_{k} \Lambda^{k}$, and GCN defines the convolutional kernel as $\mathbf{g}_{\theta}\!=\!\mathbf{I}_N-\Lambda$. Considering a special convolution kernel $\mathcal{F}_A\!=\!\mathbf{I}_N$, we have $\mathcal{F}_A *_{G} \mathbf{x} \!=\! \mathbf{U} \mathbf{I}_N \mathbf{U}^{\top} \mathbf{x} \!=\! \mathbf{U} \widehat{\mathbf{x}} \!=\! \mathbf{x}$, i.e., this is an identity mapping, where all information can be preserved. Next, we decompose the graph knowledge into low-frequency and high-frequency components \cite{bo2021beyond,wu2019simplifying} by factorizing $\mathcal{F}_A\!=\!\mathbf{I}_N$ as follows
\begin{equation*}
\begin{small}
\begin{aligned}
\mathcal{F}_A \!=\! \mathbf{I}_N  \!=\! \frac{1}{2}\bigg(\big(\underbrace{\mathbf{I}_N + \widetilde{\mathbf{D}}^{-\frac{1}{2}} \widetilde{\mathbf{A}} \widetilde{\mathbf{D}}^{-\frac{1}{2}}}_{\text{Low-Pass Filter} \  \mathcal{F}_M}\big) + \big(\underbrace{\mathbf{I}_N - \widetilde{\mathbf{D}}^{-\frac{1}{2}} \widetilde{\mathbf{A}} \widetilde{\mathbf{D}}^{-\frac{1}{2}}}_{\text{High-pass Filter} \  \mathcal{F}_H}\big)\bigg)
\end{aligned}
\end{small}
\end{equation*}
For a given signal $\mathbf{x}\in \mathbb{R}^{d}$, e.g., node feature, we have $\mathcal{F}_A *_{G} \mathbf{x} = \frac{1}{2} \left(\mathcal{F}_M + \mathcal{F}_H\right) *_{G} \mathbf{x} = \frac{1}{2} (\mathcal{F}_M *_{G} \mathbf{x} + \mathcal{F}_H *_{G} \mathbf{x}) = \mathbf{x}$, which means that any signal $\mathbf{x}$ can be decomposed into the average of two components $\mathcal{F}_M *_{G} \mathbf{x}$ and $\mathcal{F}_H *_{G} \mathbf{x}$. 
\newline
\vspace{-0.3em}

\noindent\textbf{Analysis on the Spectral Domain.}
The Proposition \ref{theorem:1} states what the two components $\mathcal{F}_M *_{G} \mathbf{x}$ and $\mathcal{F}_H *_{G} \mathbf{x}$ represent.
\begin{theorem} \label{theorem:1}
The convolution kernel $\mathcal{F}_M$ works as a low-pass filter, which filters out high-frequency information, and $\mathcal{F}_M *_{G} \mathbf{x}$ represents low-frequency knowledge; $\mathcal{F}_{H}$ works as a high-pass filter, which filters out low-frequency information, and $\mathcal{F}_H *_{G} \mathbf{x}$ represents high-frequency knowledge.
\end{theorem}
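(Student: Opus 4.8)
The plan is to rewrite both filters directly in terms of the Laplacian and read off their frequency responses from the eigendecomposition $\mathbf{L}=\mathbf{U}\Lambda\mathbf{U}^{\top}$. Since the definition of $\mathbf{L}$ gives $\widetilde{\mathbf{D}}^{-\frac{1}{2}}\widetilde{\mathbf{A}}\widetilde{\mathbf{D}}^{-\frac{1}{2}}=\mathbf{I}_N-\mathbf{L}$, I would immediately obtain $\mathcal{F}_M=2\mathbf{I}_N-\mathbf{L}$ and $\mathcal{F}_H=\mathbf{L}$. Substituting the eigendecomposition yields $\mathcal{F}_M=\mathbf{U}\,(2\mathbf{I}_N-\Lambda)\,\mathbf{U}^{\top}$ and $\mathcal{F}_H=\mathbf{U}\,\Lambda\,\mathbf{U}^{\top}$, so that each filter is diagonal in the graph Fourier basis $\{\mathbf{u}_l\}_{l=1}^{N}$ and acts on the $l$-th Fourier mode with gain $p_M(\lambda_l)=2-\lambda_l$ and $p_H(\lambda_l)=\lambda_l$, respectively. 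Because every $\lambda_l\in[0,2]$, the response $\lambda\mapsto 2-\lambda$ is monotonically decreasing while $\lambda\mapsto\lambda$ is monotonically increasing on this interval, which is exactly the defining shape of a low-pass and a high-pass response: $p_M$ sends the largest eigenvalue $\lambda=2$ to gain $0$ while amplifying $\lambda=0$ to gain $2$, and $p_H$ does the reverse.

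The second step is to connect the eigenvalue ordering to the intuitive notion of frequency, which is where the substance of the statement lies. I would invoke the standard graph-signal-processing fact that the Dirichlet energy (total variation) of a Fourier mode equals its eigenvalue, $\mathbf{u}_l^{\top}\mathbf{L}\,\mathbf{u}_l=\lambda_l$, so that small $\lambda_l$ index smooth, slowly varying (low-frequency) modes and large $\lambda_l$ index rapidly oscillating (high-frequency) modes. Expanding any signal in this basis as $\mathbf{x}=\sum_l \widehat{x}_l\,\mathbf{u}_l$, filtering gives $\mathcal{F}_M *_G \mathbf{x}=\sum_l (2-\lambda_l)\,\widehat{x}_l\,\mathbf{u}_l$ and $\mathcal{F}_H *_G \mathbf{x}=\sum_l \lambda_l\,\widehat{x}_l\,\mathbf{u}_l$. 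Hence $\mathcal{F}_M$ drives the large-$\lambda_l$ (high-frequency) coefficients toward zero while preserving the small-$\lambda_l$ (low-frequency) ones, so $\mathcal{F}_M *_G \mathbf{x}$ retains the low-frequency knowledge of $\mathbf{x}$; symmetrically $\mathcal{F}_H$ suppresses the low-frequency coefficients and keeps the high-frequency ones, so $\mathcal{F}_H *_G \mathbf{x}$ retains the high-frequency knowledge. This settles both claims at the spectral level.

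Finally, to make the picture concrete and to foreshadow the spatial-domain correspondence developed next, I would translate the two operators into node-wise form. Writing $\widetilde{d}_i=\widetilde{\mathbf{D}}_{i,i}$, the high-pass filter acts as $\big(\mathcal{F}_H *_G \mathbf{x}\big)_i = \mathbf{x}_i-\sum_{j}\widetilde{\mathbf{A}}_{i,j}(\widetilde{d}_i\widetilde{d}_j)^{-\frac{1}{2}}\mathbf{x}_j$, a normalized difference between a node and its neighbors — a discrete Laplacian / local-contrast operator that responds to sharp local variation, i.e. high frequency — whereas the low-pass filter acts as $\big(\mathcal{F}_M *_G \mathbf{x}\big)_i = \mathbf{x}_i+\sum_{j}\widetilde{\mathbf{A}}_{i,j}(\widetilde{d}_i\widetilde{d}_j)^{-\frac{1}{2}}\mathbf{x}_j$, a neighborhood-averaging (smoothing) operator that emphasizes shared, slowly varying structure, i.e. low frequency. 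I expect the main obstacle to be not the algebra, which is essentially a one-line rewrite, but rather the step that legitimizes calling a small or large eigenvalue a ``low'' or ``high'' frequency; the cleanest justification is the Rayleigh-quotient identity above, since it quantifies the oscillation of each basis mode by its eigenvalue and thereby pins down the meaning of frequency without appeal to intuition.
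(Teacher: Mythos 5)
Your proposal is correct and follows essentially the same route as the paper: both diagonalize the kernels in the graph Fourier basis and read off the responses $2-\lambda$ and $\lambda$, arguing from their monotonicity on $[0,2]$ (the paper simply states this for the $L$-fold kernels $(2-\lambda_i)^L$ and $\lambda_i^L$ of an $L$-layer GNN, of which your single-application case is the $L=1$ instance). Your additional Rayleigh-quotient step $\mathbf{u}_l^{\top}\mathbf{L}\mathbf{u}_l=\lambda_l$ justifying the eigenvalue-to-frequency identification is a sound tightening of what the paper imports from graph signal processing theory, and your node-wise rewrite reproduces what the paper defers to its Eq.~(\ref{equ:5}).
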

\begin{proof}
For a $L$-layer GNN, the signal $\mathbf{x}$ is filtered by the $L$-order convolution kernel $\mathcal{F}_M^L=(\mathbf{I}_N + \widetilde{\mathbf{D}}^{-\frac{1}{2}} \widetilde{\mathbf{A}} \widetilde{\mathbf{D}}^{-\frac{1}{2}})^L=(2\mathbf{I}_N - \mathbf{L})^L$ to output $\mathcal{F}_M^L *_{G} \mathbf{x}=\mathbf{U}(2 \mathbf{I}_N-\Lambda)^L \mathbf{U}^{\top} \mathbf{x}$ with $g_\theta^L(\lambda_i)=(2-\lambda_i)^L$. As shown in Fig.~\ref{fig:1}, $g_\theta^L(\lambda_i)$ decreases monotonically in the range $\lambda_i\in[0, 2]$ and reaches $g_\theta^L(\lambda_i\!=\!2)=0$ at $\lambda_i=2$, which mainly amplifies the low-frequency information and filters out the high-frequency information. Similarly, the $L$-order convolution kernel $\mathcal{F}_{H}^L$ has $g_\theta^L(\lambda_i)=\lambda_i^L$. As shown in Fig.~\ref{fig:1}, $g_\theta^L(\lambda_i)$ increases monotonically in the of range $\lambda_i\in[0, 2]$ and reaches $g_\theta^L(\lambda_i=0)=0$ at $\lambda_i=0$, which mainly filters out the low-frequency information but in turn amplifies the high-frequency information.
\end{proof}

\begin{figure}[!htbp]
    \vspace{-1em}
	\begin{center}
		\includegraphics[width=0.8\linewidth]{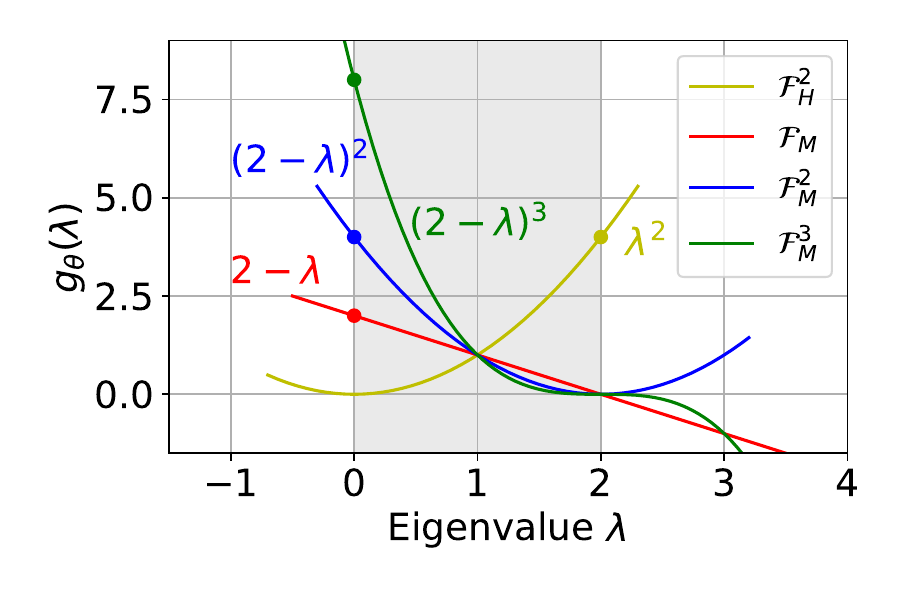}
	\end{center}
	\vspace{-2em}
	\caption{Eigenvalues \textit{vs.} Amplitudes}
	\vspace{-0.5em}
	\label{fig:1}
\end{figure}

\noindent\textbf{Correspondence on the Spatial Domain.} We have derived that $\mathcal{F}_M *_{G} \mathbf{x}$ and $\mathcal{F}_H *_{G} \mathbf{x}$ represent mainly the low- and high-frequency components of signal $\mathbf{x}$, and we next can derived their correspondences in the spatial domain, as follows
\vspace{-0.5em}
\begin{equation}
\begin{aligned}
    \mathcal{F}_M *_{G} \mathbf{x}_{i} \ \  \rightarrow \ \ \mathbf{x}_i^{(low)}  = \mathbf{x}_{i} + \sum_{j \in \mathcal{N}_{i}} \frac{\mathbf{x}_{j}}{\sqrt{|\mathcal{N}_{i}||\mathcal{N}_{j}|}} \\
    \mathcal{F}_H *_{G} \mathbf{x}_{i} \ \  \rightarrow \ \  \mathbf{x}_i^{(high)}  = \mathbf{x}_{i} - \sum_{j \in \mathcal{N}_{i}} \frac{\mathbf{x}_{j}}{\sqrt{|\mathcal{N}_{i}||\mathcal{N}_{j}|}} 
\end{aligned}
\label{equ:5}
\end{equation}
By the derivation in Eq.~(\ref{equ:5}), the low-frequency knowledge $\mathcal{F}_M*_G \mathbf{x}$ is the sum of node feature and its neighborhood features in the spatial domain. On the other hand, the high-frequency knowledge $\mathcal{F}_{H}*_G \mathbf{x}$ represents the differences between the target node feature with its neighborhood features. There have recently been some novel GNN models \cite{fagcn2021,pei2020geom,zhu2020beyond,chien2021adaptive} that can capture both low- and high-frequency information simultaneously or adaptively. However, in this paper, we focus on the design of distillation objective functions and do not consider indirect performance improvements by employing these more powerful but complex GNNs. Instead, we consider the most commonly used GNNs, such as GCN \cite{kipf2016semi}, GraphSAGE \cite{hamilton2017inductive}, and GAT \cite{velivckovic2017graph}, all of which rely on multi-layer message passing to aggregate features of neighboring nodes that are multiple hops away, i.e., they essentially work as a low-pass filter $\mathcal{F}_M^L$ or its variants.

\section{Roles Played by Low- and High-Frequency Knowledge during Distillation}
\subsection{Rethinking the Core of Knowledge Distillation}
We rethink the core of knowledge distillation from three shallow-to-deep perspectives to highlight our motivations. 

\begin{itemize}
    \item \textit{\textbf{Firstly}}, knowledge distillation enables the representations of MLPs to ``mimic" those of GNNs as closely as possible by imposing KL-divergence constraints between their softmax distribution probabilities. However, such a mimicking (or fitting) process is inevitably accompanied by a loss of information, especially high-frequency information, which explains why the performance of student MLPs is always hard to match with that of teacher GNNs.
    \item \textit{\textbf{Secondly}}, for a neural network framework, any change in the final representations is achieved indirectly by optimizing the mapping function, i.e., the network parameters. In this sense, knowledge distillation essentially optimizes the parameter matrices $\{\mathbf{W}^{(l)}\}_{l=0}^{L-1}$ of the student MLPs to make it functionally approximates the convolution kernel of the teacher GNNs, which makes the student MLPs also serve as a low-pass filter $\widetilde{\mathcal{F}}_M^L$ for graph data.
    \item \textit{\textbf{Finally}}, the low-pass filter in the spectral domain is equivalent to neighborhood aggregation in the spatial domain as derived in Eq.~(\ref{equ:5}), which in essence can be considered as a special use of the graph topology.
\end{itemize}  

\begin{figure*}[!tbp]
    \vspace{-1.5em}
	\begin{center}
		\subfigure[Mean Cosine Similarity]{\includegraphics[width=0.245\linewidth]{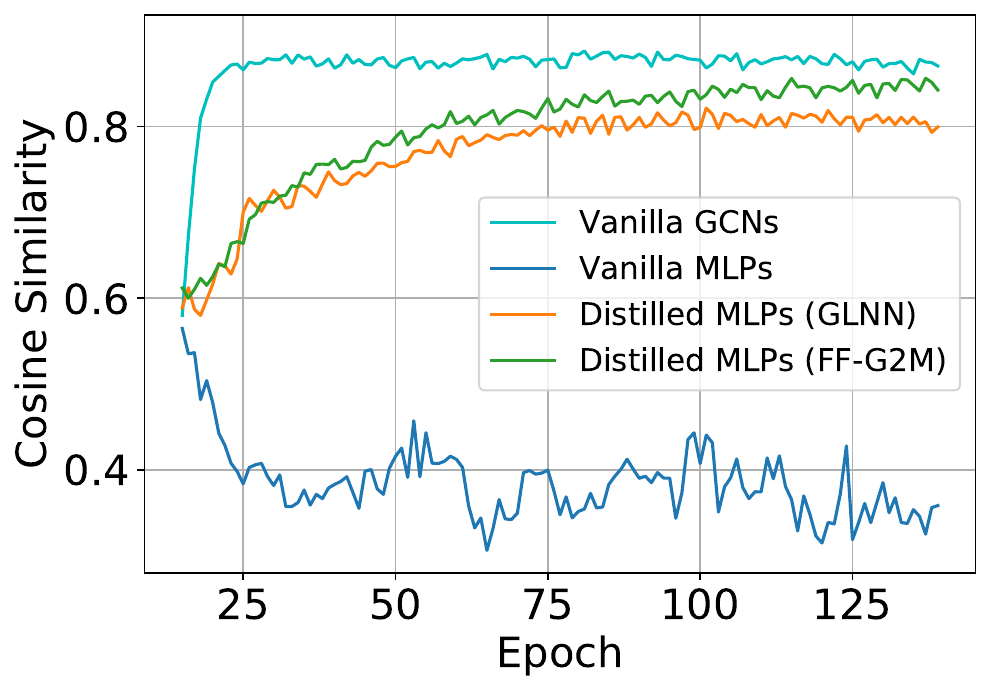}\label{fig:2a}}
	    \subfigure[Pairwise Distance Differences]{\includegraphics[width=0.245\linewidth]{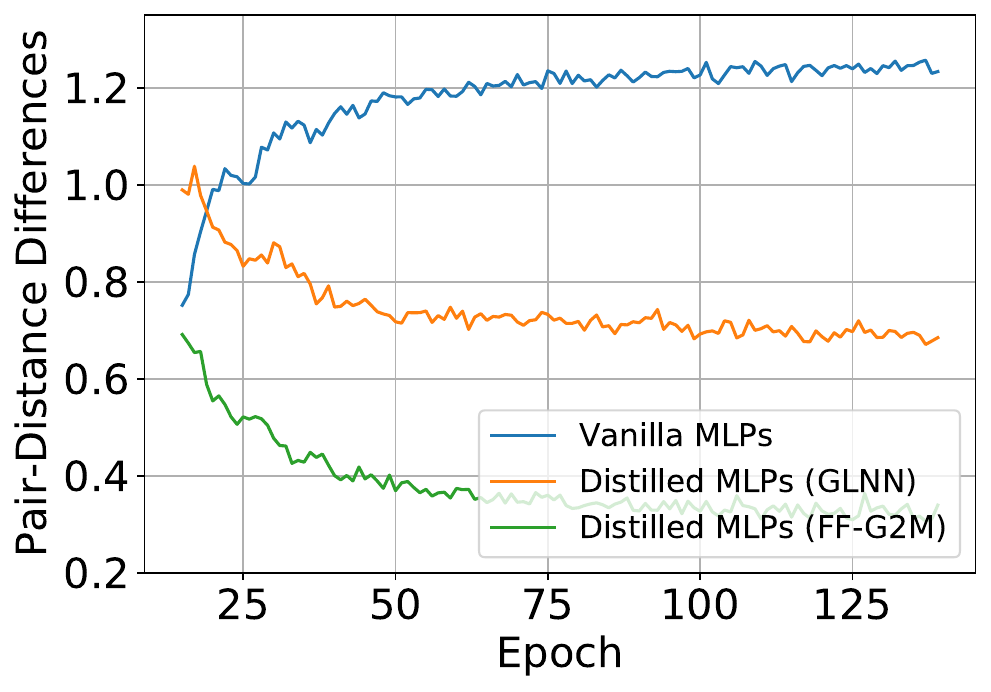}\label{fig:2b}}
	    \subfigure[Spectral Analysis]{\includegraphics[width=0.245\linewidth]{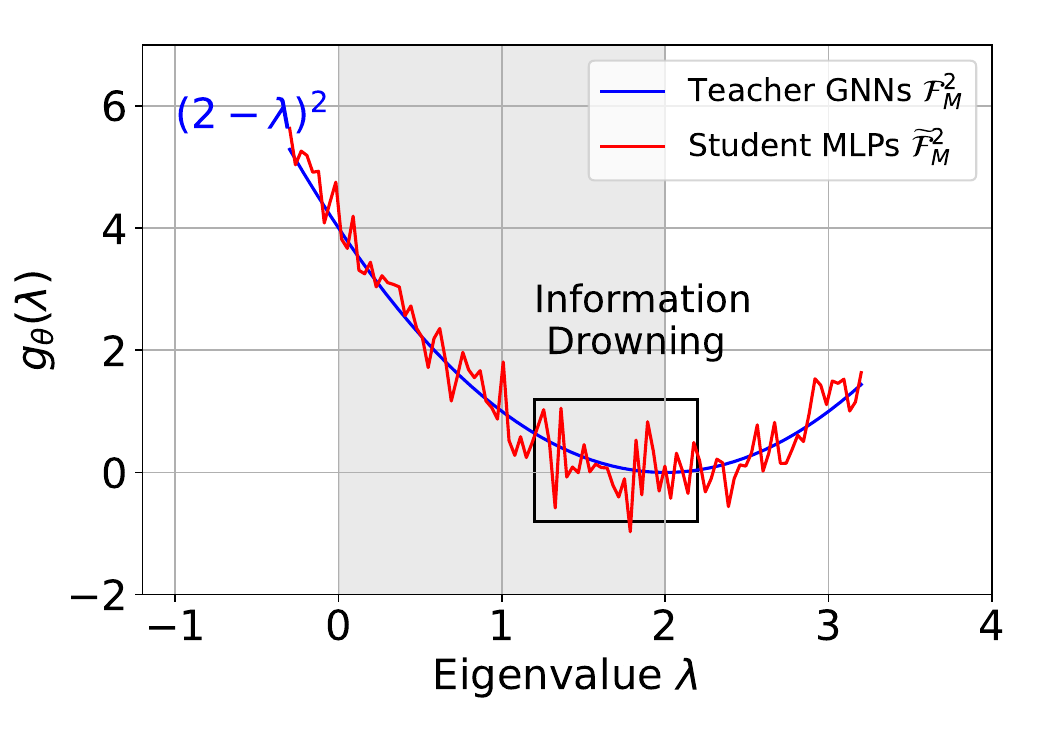}\label{fig:2c}}
	    \subfigure[Spatial Analysis]{\includegraphics[width=0.235\linewidth]{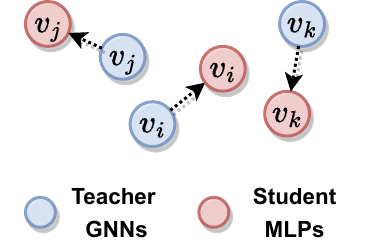}\label{fig:2d}}
	\end{center}
	\vspace{-0.8em}
	\caption{(a) Mean cosine similarity (the higher, the better) between nodes with their first-order neighbors on Cora. (b) Pairwise distance differences (the lower, the better) between teacher GCNs and student MLPs on Cora. (c)(d) Illustrations of how the high-frequency information drowning arises and what potential impact it has in the spectral and spatial domains, respectively.}
	\vspace{-1em}
	\label{fig:2}
\end{figure*}

To explore the roles played by graph topology during GNN-to-MLP distillation, we plot the \textit{mean cosine similarity} of nodes with their first-order neighbors for vanilla GCNs, vanilla MLPs, and Distilled MLPs (GLNN) on the Cora dataset in Fig.~\ref{fig:2a}, from which we observe that the mean similarity of GCNs and GLNN gradually increases with training, while that of vanilla MLPs gradually decreases, which indicates that knowledge distillation has introduced graph topology as an inductive bias (as GCNs has done), while vanilla MLPs do not. As a result, the distilled MLPs can enjoy the benefits of topology-awareness in training but without neighborhood-fetching latency in inference.

\subsection{High-Frequency Information Drowning}
Next, we discuss a potential high-frequency information drowning problem from both spectral and spatial domains, i.e., the high-frequency information of the pre-trained GNNs may be overwhelmed by the low-frequency knowledge during the process of GNN-to-MLP knowledge distillation. 
\newline
\vspace{-0.5em}

\noindent \textbf{How information drowning arises?} 
\textit{From the perspective of spectral domain}, the knowledge distillation optimizes the network parameters of the student MLPs to make it functionally approximate the convolution kernel of the teacher GNNs, i.e., $\widetilde{\mathcal{F}}_M^L \!\approx\! \mathcal{F}_M^L$. The information loss induced by such approximation may be inconsequential for high-amplitude low-frequency information but can be catastrophic for those high-frequency information with very low amplitude, as shown in Fig.~\ref{fig:2c}. As a result, compared to low-frequency information, high-frequency information is more likely to be drowned by these optimization errors.
\newline
\vspace{-0.5em}

\noindent \textbf{What impact does information drowning have?}
\textit{From the perspective of spatial domain}, the information drowning may lead to distilled MLPs that, despite preserving neighborhood smoothing well, can easily neglect differences between nodes, such as pairwisde distances. To illustrate this, we consider a target node $v_i$ and its two neighboring nodes $v_j$ and $v_k$ in Fig.~\ref{fig:2d}, where they are mapped closely by GNNs. In the process of knowledge distillation, the representations of these three nodes may be mapped around the representations of teacher GNNs, i.e., they are still mapped closely with most of their low-frequency information preserved; however, the relative distances between nodes, i.e., their high-frequency information, may be drowned dramatically. For example, node $v_i$ is adjacent to node $v_j$ but far from node $v_k$ in the representation space of teacher GNNs. However, in the representation space of student MLPs, node $v_i$ becomes closer to node $v_k$ and farther from node $v_j$.

The curves of the pairwise distance differences between the teacher GCNs and the student MLP in Fig.~\ref{fig:2b} show that common knowledge distillation (e.g., GLNN) is not good at capturing high frequency information, compared to our proposed FF-G2M. Moreover, extensive qualitative and quantitative experiments have been provided to demonstrate the harmfulness of the identified high-frequency information drowning problem in the experimental section. The detailed experimental settings, including hyperparameters and evaluation metric definitions, are available in \textbf{Appendix B\&E}.

\begin{figure*}[!tbp]
	\begin{center}
		\includegraphics[width=1.0\linewidth]{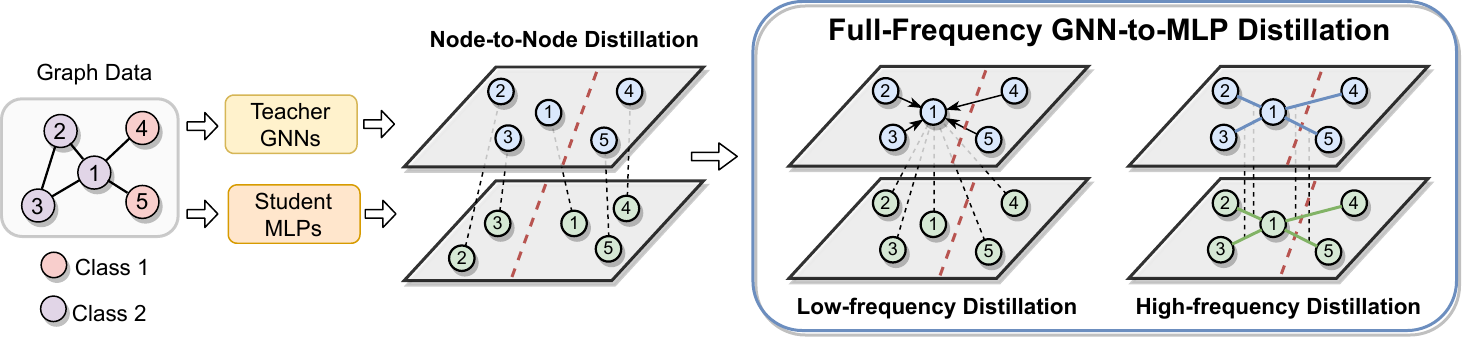}
	\end{center}
	\vspace{-0.5em}
	\caption{Illustration of the Full-Frequency GNN-to-MLP (FF-G2M) distillation framework, where the dotted red lines denote the predicted class-boundary, the solid black lines denote feature aggregation from the neighborhood, and the dashed black lines denote the distillation of knowledge (neighborhood features and pairwise distances) from teacher GNNs to student MLPs.}
	\vspace{-1em}
	\label{fig:3}
\end{figure*}

\section{Full-Frequency GNN-to-MLP (FF-G2M) Knowledge Distillation}
The above discussions reached two important insights: (1) the inductive bias of graph topology plays an important role, and (2) it is mainly the low-frequency knowledge of graph data that has been distilled from the teacher GNNs to the student MLPs. Inspired by these two insights, we propose \textit{Low-Frequency Distillation} (LFD) and \textit{Hign-Frequency Distillation} (HFD) to fully capture the low-frequency and high-frequency knowledge learned by GNNs, respectively. An high-level overview of the proposed \textit{Full-Frequency GNN-to-MLP} (FF-G2M) framework is shown in Fig.~\ref{fig:3}.

\subsection{Low-Frequency Distillation (LFD)} The node representations of teacher GNNs are generated by explicit message passing, so it mainly captures the low-frequency information of the graph data as analyzed earlier. Unlike aggregating features from neighborhoods as in GNNs, we directly distill (diffuse) knowledge from teacher GNNs into the neighborhoods of student MLPs in order to better utilize topological information and low-frequency knowledge captured by GNNs, formulated as follows
\begin{equation}
\hspace{-0.4em}
\mathcal{L}_{\mathrm{LFD}}\!=\!\frac{1}{|\mathcal{E}|}\sum_{i\in\mathcal{V}}\sum_{j\in\mathcal{N}_i \cup i }\!\mathcal{D}_{KL}\Big(\sigma(\mathbf{z}_{j}^{(L)} / \tau_1), \sigma(\mathbf{h}_{i}^{(L)} / \tau_1)\Big)
\label{equ:6}
\end{equation}
where $\tau_1$ is the low-frequency distillation temperature, and $\sigma=\operatorname{softmax}(\cdot)$ denotes an activation function.

\subsection{High-Frequency Distillation (HFD)}
As derived in Eq.~(\ref{equ:5}), the high-frequency components in the spectral domain represent the differences between node feature and its neighborhood features in the spatial domain. Inspired by this, we propose High-Frequency Distillation (HFD), a GNN knowledge objective that trains student MLPs to preserve the neighborhood pairwise differences from the representation space of teacher GNNs. The neighborhood pairwise differences around node $v_i$ are defined as the differences between the target node feature $\mathbf{s}_{i}$ and its
neighborhood features $\{\mathbf{s}_{j} \ |\  j\in\mathcal{N}_i\}$, which can be computed by the kernel $\mathcal{K}\left(\mathbf{s}_{i}, \mathbf{s}_{j}\right) = \left|\mathbf{s}_{i}-\mathbf{s}_{j}\right|$, where $|\cdot|$ denotes the element-wise absolute values. The high-frequency distillation trains the student model
to mimic the neighborhood pairwise differences from the teacher GNNs via KL-divergence constraints, which can be defined as follows
\begin{equation}
\begin{aligned}
\mathcal{L}_{\mathrm{HFD}}\!=\!\frac{1}{|\mathcal{E}|}\sum_{i\in\mathcal{V}}\sum_{j \in \mathcal{N}_i}\mathcal{D}_{\mathrm{KL}}\Big(\sigma\big(\mathcal{K}\big(\mathbf{z}_{i}^{(L)}, \mathbf{z}_{j}^{(L)}\big)/\tau_2\big), \\ \sigma\big(\mathcal{K}\big(\mathbf{h}_{i}^{(L)}, \mathbf{h}_j^{(L)}\big)/\tau_2\big)\Big)
\end{aligned}
\label{equ:7}
\end{equation}
where $\mathcal{K}\left(\cdot, \cdot\right)$ denotes the element-wise absolute values, and $\tau_2$ is the high-frequency distillation temperature.

\subsection{Training Strategy} 
The pseudo-code of the FF-G2M framework is summarized in \textbf{Appendix C}. To achieve GNN-to-MLP knowledge distillation, we first pre-train the teacher GNNs with the classification loss  $\mathcal{L}_{\mathrm{label}}=\frac{1}{|\mathcal{V}_L|}\sum_{i\in\mathcal{V}_L}\mathcal{H}\big(y_i, \sigma(\mathbf{h}_i^{(L)})\big)$, where $\mathcal{H}(\cdot)$ denotes the cross-entropy loss and $y_i$ is the ground-truth label of node $v_i$. Finally, the total objective function to distill the low- and high-frequency knowledge from the teacher GNNs into the student MLPs is defined as follows
\begin{equation*}
\mathcal{L}_{\mathrm{total}} =  \frac{\lambda}{|\mathcal{V}_L|}\sum_{i\in\mathcal{V}_L}\mathcal{H}\big(y_i, \sigma(\mathbf{z}_i^{(L)})\big) + \big(1-\lambda\big) \big(\mathcal{L}_{\mathrm{LFD}} + \mathcal{L}_{\mathrm{HFD}}\big)
\end{equation*}
where $\lambda$ is the weights to balance the influence of the classification loss and two knowledge distillation losses. The time complexity analysis of FF-G2M is available in \textbf{Appendix D}.

\subsection{Discussion and Comparision} 
In this subsection, we compare the proposed FF-G2M framework with the commonly used node-to-node distillation (e.g., GLNN) in Fig.~\ref{fig:3}. While the node-to-node distillation can map neighboring nodes closely in the representation space of MLPs, i.e., preserving low-frequency knowledge, it completely confounds the relative distance between node pairs, i.e., high-frequency knowledge is drowned, leading to a different (incorrect) class-boundary with the teacher GNNs. In terms of the proposed FF-G2M framework, the Low-Frequency Distillation distills (diffuses) the features aggregated from the neighborhood in the teacher GNNs back into their neighborhood of the student MLPs to better utilize the extracted low-frequency knowledge. Besides, High-Frequency Distillation directly distills the neighborhood pairwise differences from teacher GNNs into the student MLPs to better capture the high-frequency knowledge patterns, i.e., the relative positions between pairs of nodes.

\begin{table*}[!htbp]
\begin{center}
\vspace{-1em}
\caption{Classificatiom accuracy  $\pm$  std (\%) on six real-world datasets, where we consider three different GNN architectures (GCN, GraphSAGE, and GAT) as the teacher and pure MLPs as the student model. The best metrics are marked by \textbf{bold}.}
\vspace{-0.5em}
\label{tab:1}
\resizebox{0.85\textwidth}{!}{
\begin{tabular}{clccccccc}

\toprule
\textbf{Teacher GNNs} & \multicolumn{1}{c}{\textbf{Method}} & \textbf{Cora} & \textbf{Citeseer} & \textbf{Pubmed} & \textbf{Amazon-Photo} & \textbf{Coauthor-CS} & \textbf{Coauthor-Phy} & \textbf{\textit{Average}}$\uparrow$ \\ \midrule
\multirow{8}{*}{GCN} & Vanilla GCN & 82.2 $\pm$ 0.5 & 71.6 $\pm$ 0.4 & 79.3 $\pm$ 0.3 & 91.8 $\pm$ 0.6 & 89.9 $\pm$ 0.7 & 91.9 $\pm$ 1.2 & - \\
 & Vanilla MLP & 59.7 $\pm$ 1.0 & 60.7 $\pm$ 0.5 & 71.5 $\pm$ 0.5 & 77.4 $\pm$ 1.2 & 87.5 $\pm$ 1.4 & 89.2 $\pm$ 0.9 & - \\ 
 & GLNN & 82.8 $\pm$ 0.5 & 72.7 $\pm$ 0.4 & 80.2 $\pm$ 0.6 & 91.4 $\pm$ 1.0 & 92.7 $\pm$ 1.0 & 93.2 $\pm$ 0.5 & - \\ \cmidrule(r){2-9} 
 & FF-G2M & \textbf{84.3 $\pm$ 0.4} & \textbf{74.0 $\pm$ 0.5} & \textbf{81.8 $\pm$ 0.4} & \textbf{94.2 $\pm$ 0.4} & \textbf{93.8 $\pm$ 0.5} & \textbf{94.4 $\pm$ 0.9} & - \\
 & $\Delta_{GCN}$ & 2.1 & 2.4 & 2.5 & 2.4 & 3.9 & 2.5 & 2.63 \\
 & $\Delta_{MLP}$ & 24.6 & 13.3 & 10.3 & 16.8 & 6.3 & 5.2 & 12.75 \\
 & $\Delta_{GLNN}$ & 1.5 & 1.3 & 1.6 & 2.8 & 1.1 & 1.2 & 1.58 \\ \midrule
\multirow{8}{*}{GraphSAGE} & Vanilla SAGE & 82.5 $\pm$ 0.6 & 70.9 $\pm$ 0.6 & 77.9 $\pm$ 0.4 & 92.0. $\pm$ 0.6 & 89.7 $\pm$ 1.0 & 92.2 $\pm$ 0.9 &  - \\
 & Vanilla MLP & 59.7 $\pm$ 1.0 & 60.7 $\pm$ 0.5 & 71.5 $\pm$ 0.5 & 77.4 $\pm$ 1.2 & 87.5 $\pm$ 1.4 & 89.2 $\pm$ 0.9 & - \\
 & GLNN & 82.7 $\pm$ 0.8 & 70.5 $\pm$ 0.5 & 79.7 $\pm$ 0.6 & 91.0 $\pm$ 0.9 & 92.5 $\pm$ 1.0 & 93.0 $\pm$ 0.7 & - \\ \cmidrule(r){2-9}
 & FF-G2M & \textbf{84.5 $\pm$ 0.8} & \textbf{73.1 $\pm$ 1.2} & \textbf{80.9 $\pm$ 0.5} & \textbf{94.1 $\pm$ 0.5} & \textbf{93.6 $\pm$ 0.5} & \textbf{94.5 $\pm$ 1.1} & - \\
 & $\Delta_{SAGE}$ & 2.0 & 2.2 & 3.0 & 2.1 & 3.9 & 2.3 & 2.58 \\
 & $\Delta_{MLP}$ & 24.8 & 12.4 & 9.4 & 16.7 & 6.1 & 5.3 & 12.45 \\
 & $\Delta_{GLNN}$ & 1.8 & 2.6 & 1.2 & 3.1 & 1.1 & 1.5 & 1.88 \\ \midrule
\multirow{8}{*}{GAT} & Vanilla GAT & 81.9 $\pm$ 1.0 & 71.2 $\pm$ 0.5 & 78.6 $\pm$ 0.4 & 91.4 $\pm$ 0.8 & 90.5 $\pm$ 0.8 & 92.3 $\pm$ 1.5 & - \\
 & Vanilla MLP & 59.7 $\pm$ 1.0 & 60.7 $\pm$ 0.5 & 71.5 $\pm$ 0.5 & 77.4 $\pm$ 1.2 & 87.5 $\pm$ 1.4 & 89.2 $\pm$ 0.9 & - \\
 & GLNN & 82.1 $\pm$ 0.7 & 70.6 $\pm$ 0.8 & 80.4 $\pm$ 1.0 & 91.7 $\pm$ 0.7 & 92.7 $\pm$ 0.9 & 92.7 $\pm$ 0.6 & - \\ \cmidrule(r){2-9}
 & FF-G2M & \textbf{84.0 $\pm$ 0.7} & \textbf{73.5 $\pm$ 0.8} & \textbf{81.1 $\pm$ 0.6} & \textbf{93.9 $\pm$ 0.7} & \textbf{94.0 $\pm$ 0.7} & \textbf{94.7 $\pm$ 1.2} & - \\
 & $\Delta_{GAT}$ & 2.1 & 2.3 & 2.5 & 2.5 & 3.5 & 2.4 & 2.55 \\
 & $\Delta_{MLP}$ & 24.3 & 12.8 & 9.6 & 16.5 & 6.5 & 5.5 & 12.53 \\
 & $\Delta_{GLNN}$ & 1.9 & 2.9 & 0.7 & 2.2 & 1.3 & 2.0 & 1.83 \\ \bottomrule
 
\end{tabular}} \vspace{-1em}
\end{center}
\end{table*}

\section{Experiments} \label{sec:6}
\textbf{Datasets.}
The effectiveness of the FF-G2M framework is evaluated on six public real-world datasets, including Cora \cite{sen2008collective}, Citeseer \cite{giles1998citeseer}, Pubmed \cite{mccallum2000automating}, Coauthor-CS, Coauthor-Physics, and Amazon-Photo \cite{shchur2018pitfalls}. For each dataset, following the data splitting settings of \cite{kipf2016semi,liu2020towards}, we select 20 nodes per class to construct a training set, 500 nodes for validation, and 1000 nodes for testing.  A statistical overview of these datasets is placed in \textbf{Appendix A}. Besides, we defer the implementation details and hyperparameter settings for each dataset to \textbf{Appendix B} and supplementary materials.
\newline
\vspace{-0.5em}

\noindent \textbf{Baselines.}
Three basic components in knowledge distillation are (1) teacher model, (2) student model, and (3) distillation loss. As a model-agnostic general framework, FF-G2M can be combined with any teacher GNN architecture. In this paper, we consider three types of teacher GNNs, including GCN \cite{kipf2016semi}, GraphSAGE \cite{hamilton2017inductive}, and GAT \cite{velivckovic2017graph}. As for the student model, we default to using pure MLPs (with the same network architecture as the teacher GNNs) as the student model for a fair comparison. Finally, the focus of this paper is on designing distillation objectives rather than powerful teacher and student models. Therefore, we only take GLNN \cite{zhang2021graph} as an important baseline to compare FF-G2M with the conventional node-to-node distillation approach. The experiments of all baselines and FF-G2M are implemented based on the standard implementation in the DGL library \cite{wang2019dgl} using PyTorch 1.6.0 library on NVIDIA V100 GPU. Each set of experiments is run five times with different random seeds, and the average are reported as metrics.

\vspace{-0.5em}
\subsection{Classification Performance Comparison}
This paper aims to explore which knowledge patterns \textit{should} and \textit{how to} be distilled into student MLPs, rather than designing more powerful teacher GNNs. Therefore, we consider three classical GNNs, including GCN, GraphSAGE, and GAT, as the teacher models and distill their knowledge into MLPs with the same network architecture. The experimental results on six datasets are reported in Table.~\ref{tab:1}, from which we can make the following observations: (1) In general, more powerful teacher GNNs can lead to student MLPs with better classification performance. However, such improvements are usually very limited and do not work for all datasets and GNN architectures. For example, on the Citeseer dataset, the performance of GLNN drops over the vanilla implementation of teacher GNNs by 0.4\% (GraphSAGE) and 0.6\% (GAT), respectively. (2) The proposed FF-G2M framework can consistently improve the performance of student MLPs across three GNN architectures on all six datasets. For example, FF-G2M can outperform vanilla teacher GNNs by 2.63\% (GCN), 2.58\% (GraphSAGE), and 2.55\% (GAT) averaged over six datasets, respectively.

\begin{figure*}[!tbp]
    \vspace{-0.5em}
	\begin{center}
		\subfigure[Pre-trained GCNs]{\includegraphics[width=0.24\linewidth]{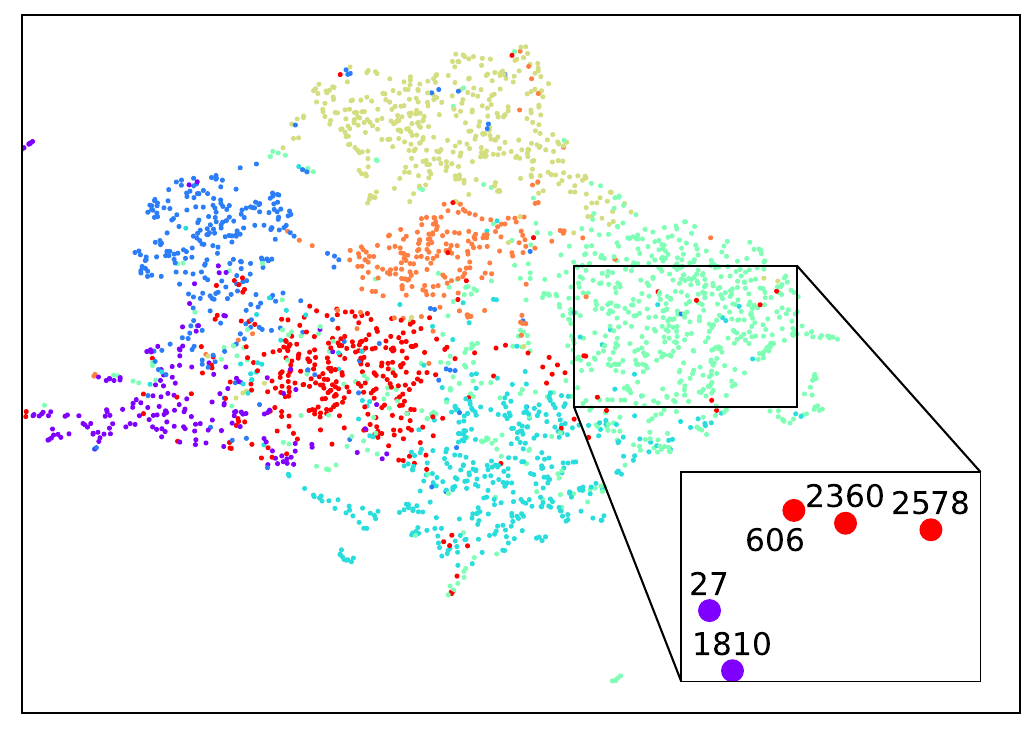}\label{fig:4a}}
		\subfigure[Vanilla MLPs]{\includegraphics[width=0.24\linewidth]{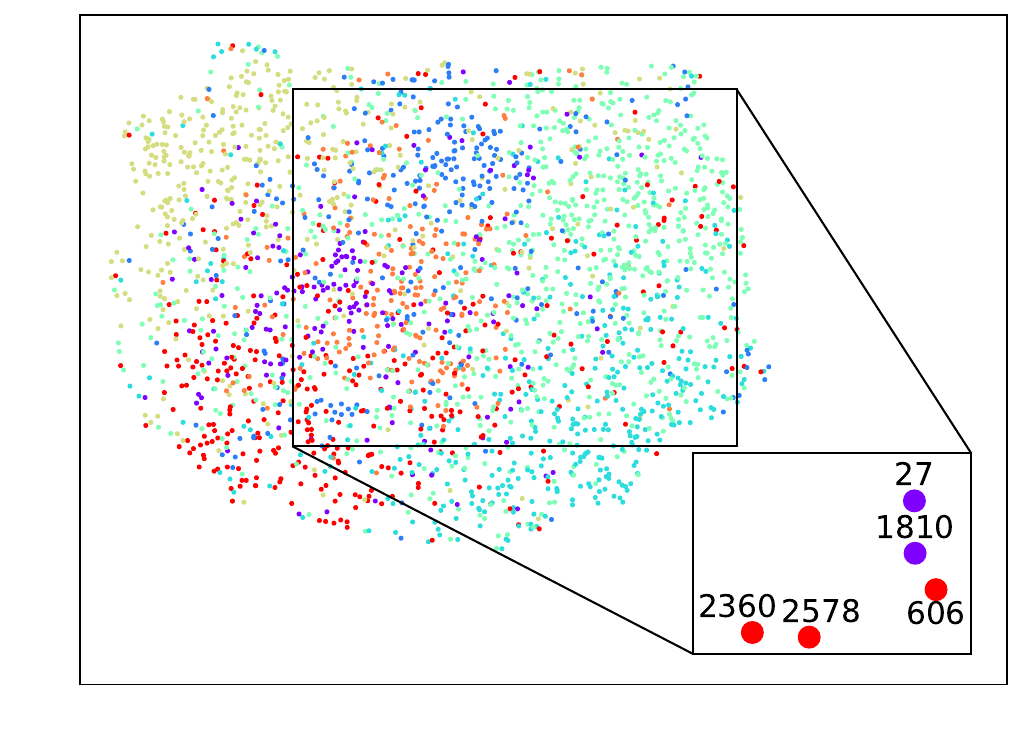}\label{fig:4b}}
		\subfigure[Distilled MLPs (GLNN)]{\includegraphics[width=0.24\linewidth]{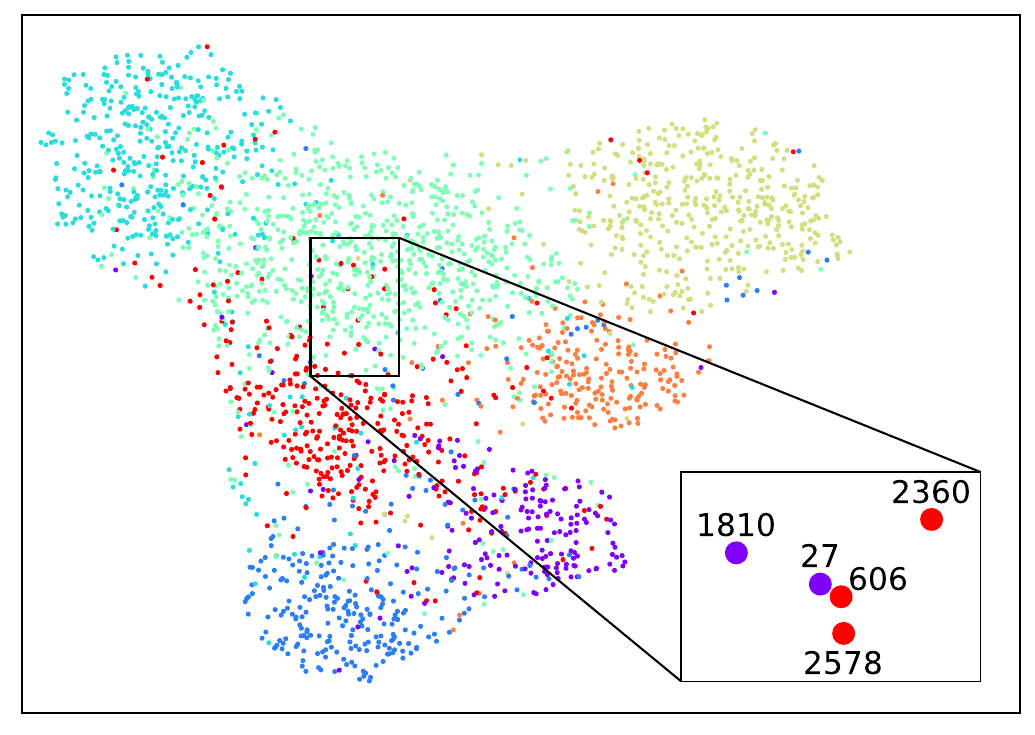}\label{fig:4c}}
		\subfigure[Distilled MLPs (FF-G2M)]{\includegraphics[width=0.24\linewidth]{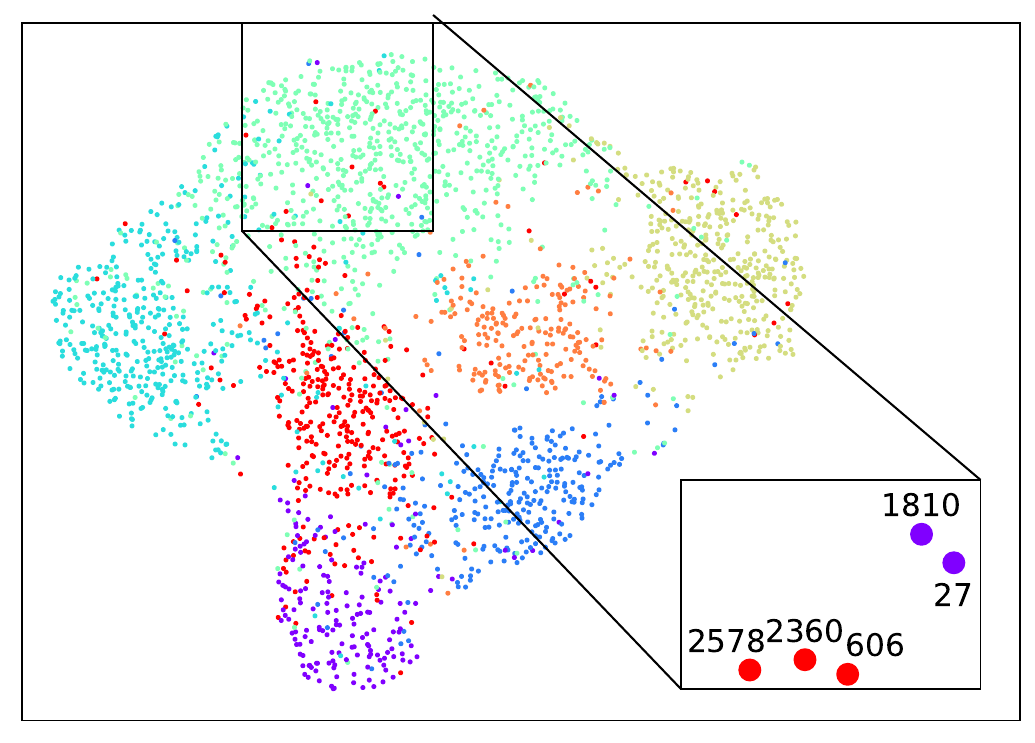}\label{fig:4d}}
	\end{center}
	\vspace{-1em}
	\caption{Representation 2D-Visualizations (by UMAP \cite{mcinnes2018umap}) of the teacher model and three student models on Cora. Each node is colored by its ground-truth label, and the numbers around the nodes denote the node ids.}
	\label{fig:4}
\end{figure*}

\begin{table*}[!tbp]
\begin{center}
\caption{Classificatiom accuracy $\pm$ std (\%) on six real-world datasets. The best metrics are marked by \textbf{bold}.}
\vspace{-0.5em}
\label{tab:2}
\resizebox{0.85\textwidth}{!}{
\begin{tabular}{lcccccc}

\toprule
\textbf{Method} & \textbf{Cora} & \textbf{Citeseer} & \textbf{Pubmed} & \textbf{Amazon-Photo} & \textbf{Coauthor-CS} & \textbf{Coauthor-Phy} \\ \midrule
Vanilla GCN & 82.2 $\pm$ 0.5 & 71.6 $\pm$ 0.4 & 79.3 $\pm$ 0.3 & 91.8 $\pm$ 0.6 & 89.9 $\pm$ 0.7 & 91.9 $\pm$ 1.2 \\
Vanilla MLP & 59.7 $\pm$ 1.0 & 60.7 $\pm$ 0.5 & 71.5 $\pm$ 0.5 & 77.4 $\pm$ 1.2 & 87.5 $\pm$ 1.4 & 89.2 $\pm$ 0.9 \\ 
GLNN \cite{zhang2021graph} & 82.8 $\pm$ 0.5 & 72.7 $\pm$ 0.4 & 80.2 $\pm$ 0.6 & 91.4 $\pm$ 1.0 & 92.7 $\pm$ 1.0 & 93.2 $\pm$ 0.5 \\ \midrule \midrule
Low-Frequency KD w/ $\mathcal{L}_{\mathrm{LFD}}$ & 83.4 $\pm$ 0.9 & 73.7 $\pm$ 0.6 & 81.0 $\pm$ 0.5 & 92.1 $\pm$ 0.8 & 93.2 $\pm$ 0.8 & 93.7 $\pm$ 0.8 \\
High-Frequency KD w/ $\mathcal{L}_{\mathrm{HFD}}$ & 68.5 $\pm$ 0.8 & 63.2 $\pm$ 0.7 & 74.4 $\pm$ 0.4 & 82.5 $\pm$ 1.3 & 89.3 $\pm$ 1.7 & 91.0 $\pm$ 1.6 \\
FF-G2M (full model) & \textbf{84.3 $\pm$ 0.4} & \textbf{74.0 $\pm$ 0.5} & \textbf{81.8 $\pm$ 0.4} & \textbf{94.2 $\pm$ 0.4} & \textbf{93.8 $\pm$ 0.5} & \textbf{94.4 $\pm$ 0.9} \\ \toprule

\end{tabular}} \vspace{-1.5em}
\end{center}
\end{table*}

\vspace{-0.5em}
\subsection{Qualitative and Quantitative Analysis}
Extensive qualitative and quantitative experiments are conducted to explore the existence and harmfulness of the information drowning problem and how to solve it by FF-G2M.

\subsubsection{Qualitative Analysis on Visualizations.} We consider GCNs as the teacher model and compare its visualization with that of vanilla MLPs, GLNN, and FF-G2M on the Cora dataset (due to space limitations, more results can be found in \textbf{Appendix F}). We select a target node (id 27 for Cora) and analyze its relative position relationship with its neighbors in Fig.\ref{fig:4}, from which we observe that: (1) The vanilla MLPs map neighboring nodes apart, which indicates that it is not even good at capturing low-frequency information. (2) GLNN fails to capture the relative positions between the target node and its neighbors, i.e., high-frequency information. (3) FF-G2M well preserves the relative positions between nodes while mapping neighboring nodes closely, which suggests that it is good at capturing both low- and high-frequency information. For example, on the Cora dataset, the target node (id 27) is the closest to node (id 1810) and the farthest from node (id 2678) in the visualizations of both teacher GCNs and FF-G2M's student MLPs.

\subsubsection{Quantitative Analysis on Evaluation Metrics.} To study what knowledge patterns of GNNs are actually distilled into MLPs during GNN-to-MLP distillation, we consider both (1) low-frequency knowledge, measured by the \textit{mean cosine similarity} of nodes with their first-order neighbors, and (2) high-frequency knowledge, measured by \textit{KL-divergence} between the pairwise distances of teacher GNNs and student MLPs, respectively. The detailed mathematical definitions of these two evaluation metrics are available in \textbf{Appendix E}. From the experimental results on the Cora dataset reported in Fig.~\ref{fig:5}, we make four observations: (1) The vanilla MLP does not consider the inductive bias of the graph topology at all and thus fails to capture the low-and high-frequency knowledge in the graph data. (2) GLNN is capable of successfully capturing low-frequency information, i.e., neighborhood smoothing, but is not good at capturing high-frequency knowledge, i.e., difference information between pairs of nodes. (3) The proposed low- and high-frequency distillation has an advantage over GLNN in capturing one type of individual frequency but lags behind in another frequency. (4) The proposed FF-G2M combines both the two distillation and is better at capturing both low- and high-frequency knowledge than GLNN, especially the latter.

\begin{figure}[!htbp]
    \vspace{-0.5em}
	\begin{center}
		\subfigure[Low-frequency Knowledge]{\includegraphics[width=0.48\linewidth]{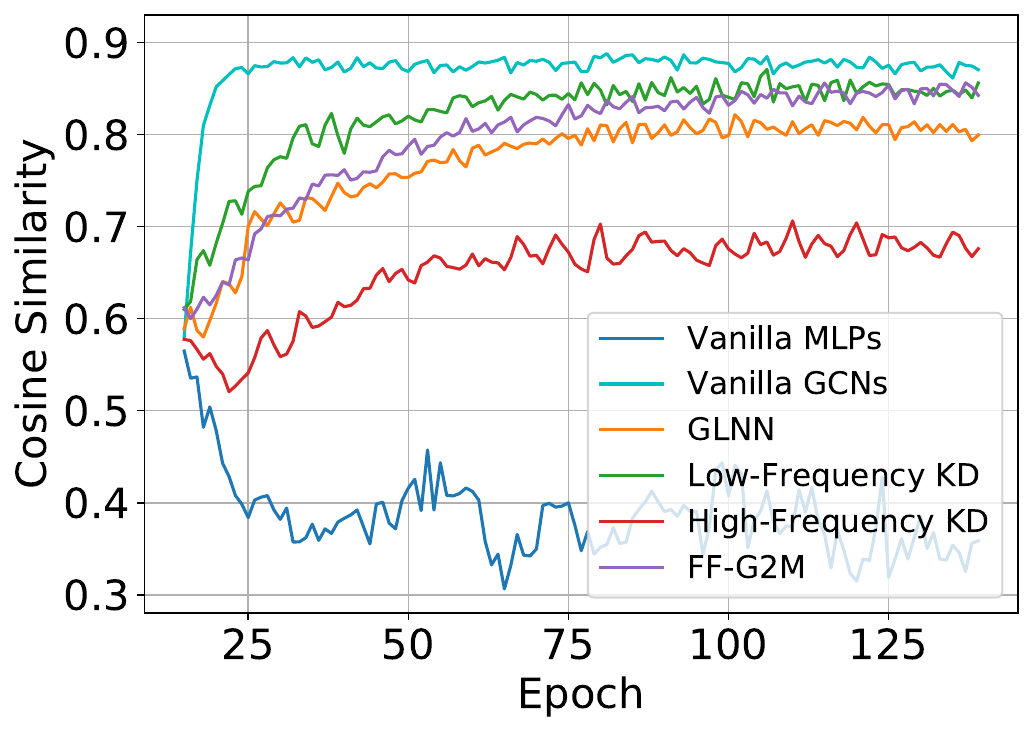}\label{fig:5a}}
        \subfigure[High-frequency Knowledge]{\includegraphics[width=0.48\linewidth]{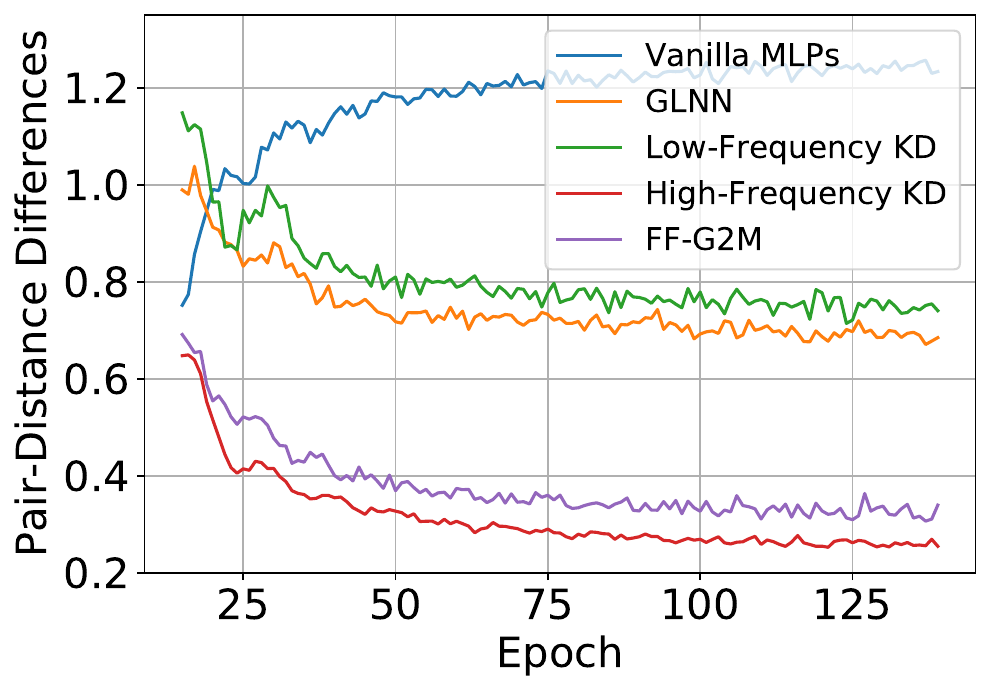}\label{fig:5b}}
	\end{center}
	\vspace{-1em}
	\caption{(a) Curves of mean cosine similarity (\textbf{the higher, the better}) between nodes with their first-order neighbors. (b) Curves of pairwise distance differences (\textbf{the lower, the better}) between the teacher GNNs and student MLPs.}
	\vspace{-1em}
	\label{fig:5}
\end{figure}

\subsection{Roles of Low- and High-frequency Knowledge} 
To evaluate the roles played by low- and high-frequency knowledge in GNN-to-MLP distillation, we consider distillation with only $\mathcal{L}_{\mathrm{LFD}}$ and $\mathcal{L}_{\mathrm{HFD}}$, in addition to the full FF-G2M model. The experiments (with GCNs as the teacher model) on six datasets are reported in Table.~\ref{tab:2}, from which we observe that: (1) The proposed low-frequency distillation $\mathcal{L}_{\mathrm{LFD}}$ makes fuller use of the graph topology and the low-frequency information from GNNs, and in turn outperforms GLNN that adopts node-to-node distillation on all six datasets. (2) While both low- and high-frequency distillation can work alone to improve the performance of vanilla MLPs, the former plays a \emph{primary} role and the latter a \emph{secondary} (auxiliary) role. More importantly, these two distillations are complementary to each other and can further improve performance on top of each other. (3) The FF-G2M (full model) considers both low- and high-frequency distillation and is capable of capturing full-frequency knowledge, and thus can far outperform GLNN on all six datasets.

\section{Conclusion}
In this paper, we factorize the knowledge learned by GNNs into low- and high-frequency components in the spectral and spatial domains and then conduct a comprehensive investigation on their roles played in GNN-to-MLP distillation. Our key finding is existing GNN-to-MLP distillation may suffer from a potential \emph{information drowning} problem, i.e., the high-frequency knowledge of the pre-trained GNNs may be overwhelmed by the low-frequency knowledge during distillation. Therefore, we propose a novel \textit{Full-Frequency GNN-to-MLP} (FF-G2M) knowledge distillation framework, which extracts both low- and high-frequency knowledge from GNNs and injects it into MLPs. As a simple but general framework, FF-G2M outperforms other leading methods across various GNN architectures and graph datasets. Limitations still exist; for example, this paper pays little attention to the special designs on teacher GNNs, and designing more expressive teachers to directly capture full-frequency knowledge may be another promising direction.

\section{Acknowledgement}
This work is supported in part by Ministry of Science and Technology of the People's Republic of China (No. 2021YFA1301603) and National Natural Science Foundation of China (No. U21A20427).

\bibliography{ref}

\clearpage
\renewcommand\thefigure{A\arabic{figure}}
\renewcommand\thetable{A\arabic{table}}
\setcounter{table}{0}
\setcounter{figure}{0}

\section*{Appendix}

\subsection*{A. Dataset Statistics} 
\emph{Six} publicly available graph datasets are used to evaluate the FF-G2M framework. An overview summary of the statistical characteristics of datasets is given in Tab.~\ref{tab:A1}. For the three small-scale datasets, namely Cora, Citeseer, and Pubmed, we follow the data splitting strategy in \cite{kipf2016semi}. For the three large-scale datasets, namely Coauthor-CS, Coauthor-Phy, and Amazon-Photo, we follow \cite{zhang2021graph,luo2021distilling} to randomly split the data, and each random seed corresponds to a different splitting.

\begin{table*}[!bp]
\begin{center}
\caption{Statistical information of the datasets.}
\label{tab:A1}
\resizebox{0.8\textwidth}{!}{
\begin{tabular}{lcccccc}

\toprule
\textbf{Dataset} & \textbf{Cora} & \textbf{Citeseer} & \textbf{Pubmed} & \textbf{Amazon-Photo} & \textbf{Coauthor-CS} & \textbf{Coauthor-Phy} \\ \midrule
$\#$ Nodes & 2708 & 3327 & 19717 & 7650 & 18333 & 34493 \\
$\#$ Edges & 5278 & 4614 & 44324 & 119081 & 81894 & 247962 \\
$\#$ Features & 1433 & 3703 & 500 & 745 & 6805 & 8415 \\
$\#$ Classes & 7 & 6 & 3 & 8 & 15 & 5 \\
Label Rate & 5.2\% & 3.6\% & 0.3\% & 2.1\% & 1.6\% & 0.3\% \\ \bottomrule

\end{tabular}} \vspace{-1em}
\end{center}
\end{table*}

\subsection*{B. Hyperparameters and Search Space}
All baselines and our approach are implemented based on the standard implementation in the DGL library \citep{wang2019dgl} using the PyTorch 1.6.0 library with Intel(R) Xeon(R) Gold 6240R @ 2.40GHz CPU and NVIDIA V100 GPU. The following hyperparameters are set for all datasets: learning rate $lr$ = 0.01 ($lr$ = 0.001 for Amazon-Photo); weight decay $decay$ = 5e-4; Maximum Epoch $E$ = 500; Layer number $L$ = 2 ($L$ = 3 for Cora and Amazon-Photo); distillation temperature $\tau_1$ = 1.0. The other dataset-specific hyperparameters are determined by a hyperparameter search tool - NNI for each dataset, including hidden dimension $F$, loss weight $\lambda$, dropout ratio $R$, and distillation temperature $\tau_2$. The hyperparameter search space is shown in Tab.~\ref{tab:A2}, and the model with the highest validation accuracy is selected for testing. The best hyperparameter choices for each dataset and GNN architecture are available in the supplementary materials.

\begin{table}[!htbp]
\begin{center}
\caption{Hyperparameter search space.}
\label{tab:A2}
\begin{tabular}{lc}
\toprule
Hyperparameters & Search Space \\ \midrule
Hidden Dimension $F$ & {[}128, 256, 512, 1024, 2048{]} \\ \hline
Loss Weight $\lambda$ & {[}0.0, 0.1, 0.2, 0.3, 0.4, 0.5{]} \\ \hline
Dropout Ratio $R$ & {[}0.3, 0.4, 0.5, 0.6{]} \\ \hline
Temperature $\tau_2$ & {[}1.0, 1.1, 1.2, 1.3, 1.4, 1.5{]} \\ \bottomrule
\end{tabular}
\end{center}
\end{table}

\subsection*{C. Pseudo-code}
The pseudo-code of the proposed FF-G2M framework is summarized in Algorithm.~\ref{algo:1}.

\subsection*{D. Complexity Analysis}
The training time complexity of FF-G2M mainly comes from two parts: (1) GNN training $\mathcal{O}(|\mathcal{V}|dF+|\mathcal{E}|F)$ and (2) knowledge distillation $\mathcal{O}(|\mathcal{E}|F)$, where $d$ and $F$ are the dimensions of input and hidden spaces. The total time complexity $\mathcal{O}(|\mathcal{V}|dF+|\mathcal{E}|F)$ is linear w.r.t the number of nodes $|\mathcal{V}|$ and edges $|\mathcal{E}|$, which is in the same order as GCN and GLNN. Besides, the inference time of FF-G2M can be reduced from $\mathcal{O}(|\mathcal{V}|dF+|\mathcal{E}|F)$ to $\mathcal{O}(|\mathcal{V}|dF)$, which is as fast as MLP, due to the removal of neighborhood aggregation.

\begin{algorithm}[!htbp]
	\caption{Algorithm for the \textit{Full-Frequency GNN-to-MLP} knowledge distillation framework}
	\label{algo:1}
	\begin{algorithmic}[1]
		\Require Feature Matrix: $\mathbf{X}$; Edge Set: $\mathcal{E}$; \# Epochs: $E$. 
		
		\Ensure Predicted Labels $\mathcal{Y}_U$ and network parameters of student MLPs $\{\mathbf{W}^{l}\}_{l=0}^{L-1}$.
		
		\State Randomly initialize the parameters of GNNs and MLPs.
		\State Compute GNN representations and pre-train the GNNs until convergence by $\mathcal{L}_{\mathrm{label}}$.
		\For{$epoch$ $\in$ \{0, 1, $\cdots$, $E-1$\}}
		\State Compute GNN representations $\{\mathbf{h}_i^{(L)}\}_{i=1}^N$ from the 
		\State pre-trained GNNs and freeze it;
		\State Compute MLP representations $\{\mathbf{z}_i^{(L)}\}_{i=1}^N$ and calcu-
		\State late the total loss $\mathcal{L}_{\mathrm{total}}$;
		\State Update model parameters $\{\mathbf{W}^{l}\}_{l=0}^{L-1}$ by back propa-
		\State gation of loss $\mathcal{L}_{\mathrm{total}}$.
		
		\EndFor
        \State Predicted labels $\mathcal{Y}_U$ for those unlabeled nodes $\mathcal{V}_U$.
		\State \textbf{return} Predicted labels $\mathcal{Y}_U$ and the network parameters of student MLPs $\{\mathbf{W}^{l}\}_{l=0}^{L-1}$.
	\end{algorithmic}
\end{algorithm}

\subsection*{E. Definitions on Evaluation Metrics}
As derived in Eq.~(\ref{equ:5}), the low-frequency knowledge can be represented by the sum of the target node feature and its neighborhood features in the spatial domain, which essentially encourages neighborhood smoothing. In this paper, the low-frequency knowledge is measured by the \textit{mean cosine similarity} of nodes with their 1-order neighbors, as follows
\begin{equation*}
    \textit{mean cosine similarit} = \frac{1}{|\mathcal{E}|} \sum_{i\in\mathcal{V}}\sum_{j\in\mathcal{N}_i} \frac{\mathbf{s}_i\cdot\mathbf{s}_j}{\left|\mathbf{s}_i\right|\left|\mathbf{s}_j\right|}
\end{equation*}
where $\mathbf{s}_i$ and $\mathbf{s}_j$ are the representations of node $v_i$ and $v_j$, and we set $\mathbf{s}_i=\mathbf{h}_i^{(L)}$ for teacher GNNs and $\mathbf{s}_i=\mathbf{z}_i^{(L)}$ for student MLPs. On the other hand, the high-frequency knowledge can be represented as the differences between the target node feature with its neighborhood features in the spatial domain. In this paper, the low-frequency knowledge is measured by the \textit{KL-divergence} between the pairwise distances of teacher GNNs and student MLPs, which is defined as
\begin{equation*}
\begin{aligned}
    \textit{KL-divergence} =\frac{1}{|\mathcal{E}|}\sum_{i\in\mathcal{V}}\sum_{j \in \mathcal{N}_i} \operatorname{softmax}\Big(\big|\mathbf{z}_j^{(L)}-\\\mathbf{z}_i^{(L)}\big|\Big) \log \frac{\operatorname{softmax}\Big(\big|\mathbf{z}_j^{(L)}-  \mathbf{z}_i^{(L)}\big|\Big)}{\operatorname{softmax}\Big(\big|\mathbf{h}_j^{(L)}-\mathbf{h}_i^{(L)}\big|\Big)}
\end{aligned}
\end{equation*}

\subsection*{F. More Qualitative Visualizations.} We consider GCNs as the teacher model and compare its visualization with that of vanilla MLPs, GLNN, and FF-G2M on the Citeseer dataset. We select a target node (id 557 for Citeseer) and analyze its relative position relationship with its neighbors in Fig.\ref{fig:A1}. \emph{The analyses presented in the paper for the Cora dataset still hold true for the Citeseer dataset.}

\begin{figure*}[!tbp]
	\begin{center}
		\subfigure[Pre-trained GCNs]{\includegraphics[width=0.24\linewidth]{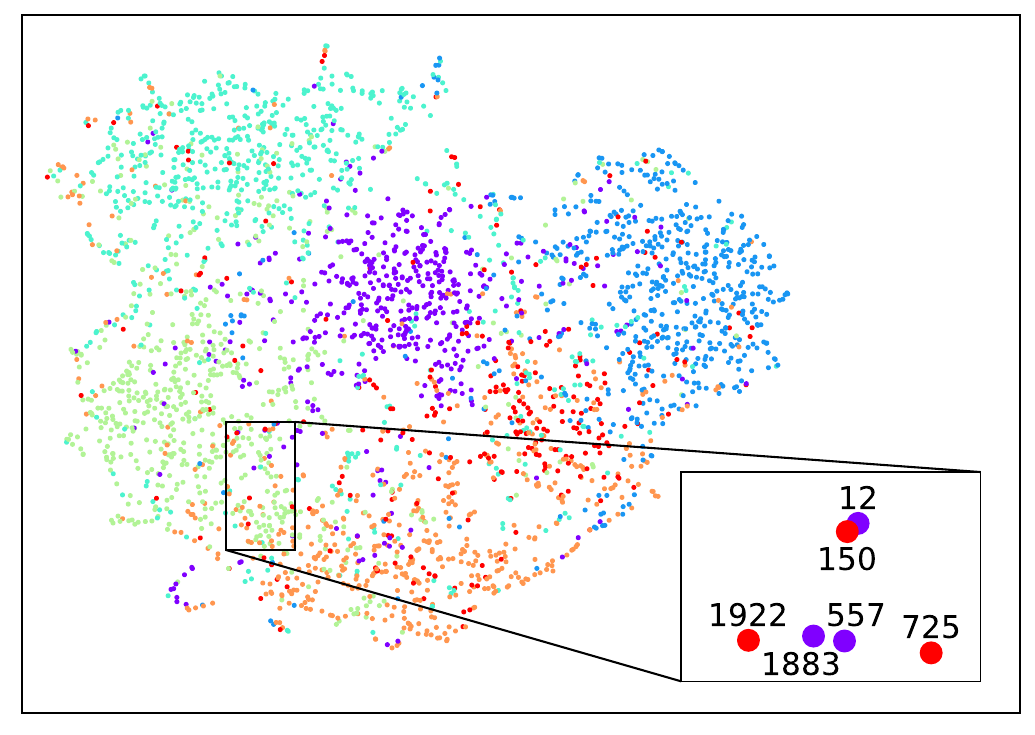}}
		\subfigure[Vanilla MLPs]{\includegraphics[width=0.24\linewidth]{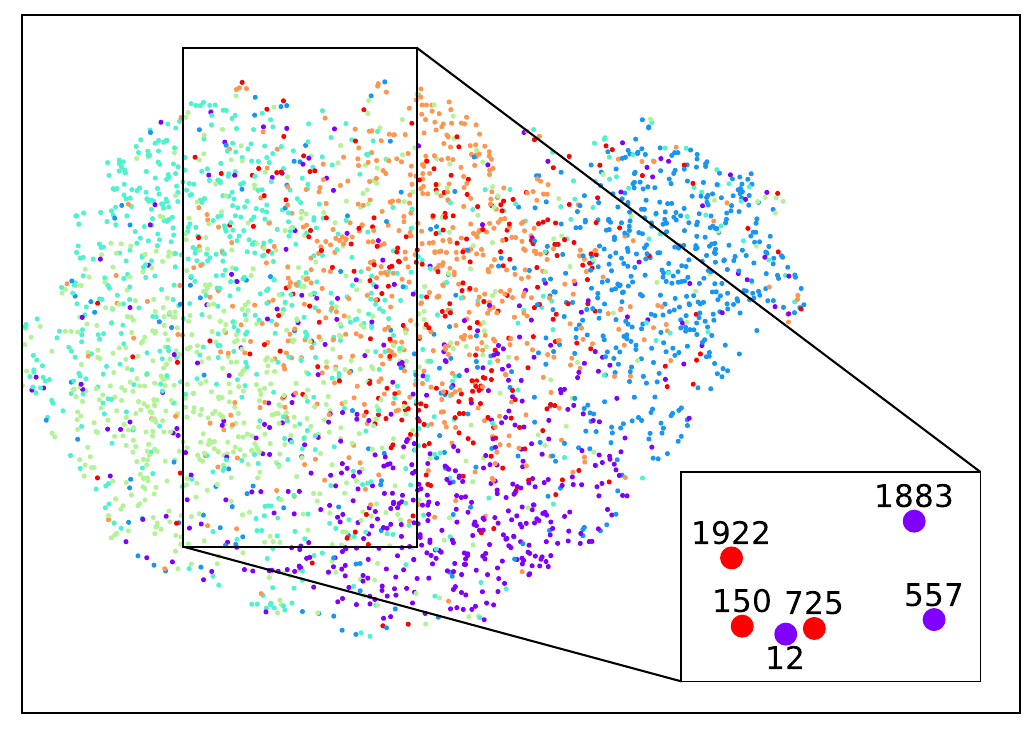}}
		\subfigure[Distilled MLPs (GLNN)]{\includegraphics[width=0.24\linewidth]{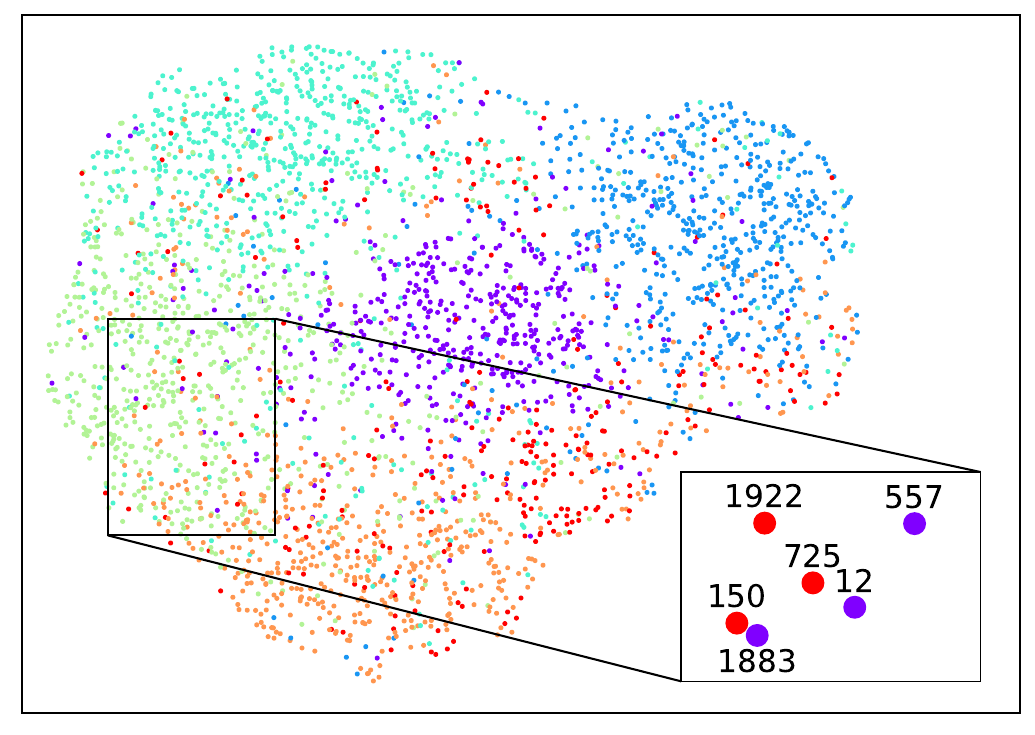}}
		\subfigure[Distilled MLPs (FF-G2M)]{\includegraphics[width=0.24\linewidth]{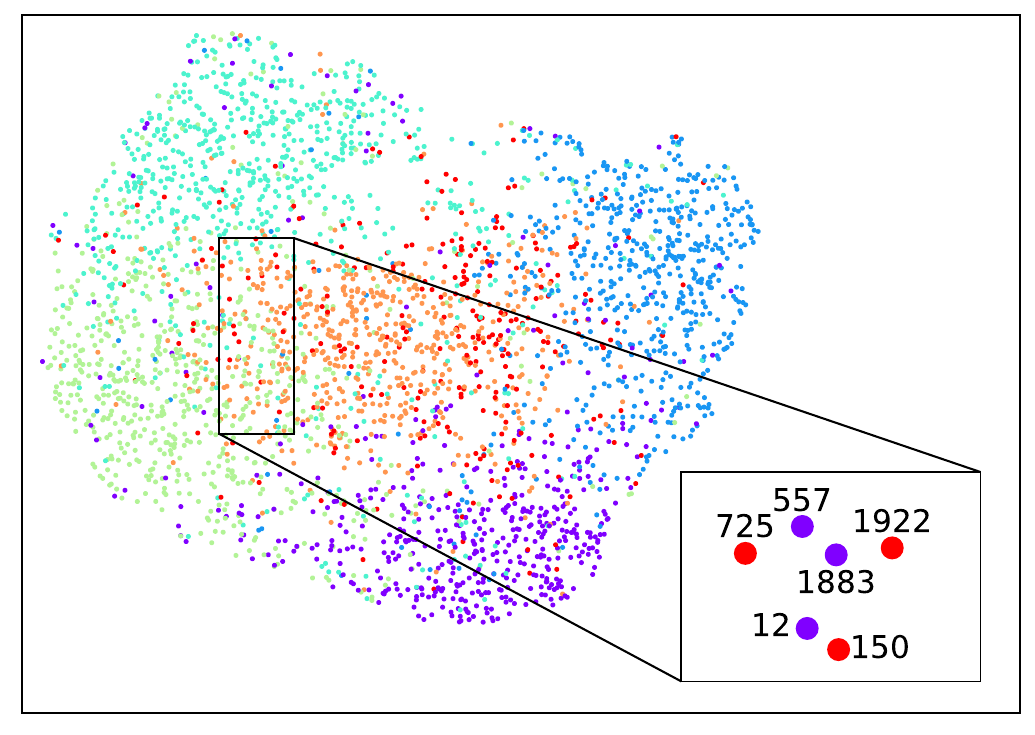}}
	\end{center}
	\vspace{-1em}
	\caption{Representation 2D-Visualizations (by UMAP \cite{mcinnes2018umap}) of the teacher model and three student models on Citeseer. Each node is colored by its ground-truth label, and the numbers around nodes denote the node ids.}
	\label{fig:A1}
\end{figure*}

\end{document}